\relax
\documentclass[letterpaper]{article}
  \usepackage{aaai20}
  \usepackage{times}
  \usepackage{helvet}
  \usepackage{courier}
  \usepackage[hyphens]{url}
  \usepackage{graphicx}
  \usepackage[title]{appendix}
  \urlstyle{rm}
  
  \usepackage{graphicx}
  \frenchspacing
  \setlength{\pdfpagewidth}{8.5in}
  \setlength{\pdfpageheight}{11in}
  %
  %
\pdfinfo{
/Title (A Variational Perturbative Approach to Planning in Graph-based Markov Decision Processes)
/Author (Dominik Linzner, Heinz Koeppl)
/Keywords (MAS)}
\author{
  Dominik Linzner\textsuperscript{\rm 1} and Heinz Koeppl\textsuperscript{\rm 1, \rm 2} \\
 \textsuperscript{\rm 1}Department of Electrical Engineering and Information Technology\\\
 \textsuperscript{\rm 2}Department of Biology\\
  Technische Universit\"at Darmstadt\\
\{dominik.linzner, heinz.koeppl\}@bcs.tu-darmstadt.de
}

\usepackage{bbm}
\usepackage[T1]{fontenc}    
\usepackage{url}            
\usepackage{booktabs}       
\usepackage{amsfonts,amsmath,amssymb,stmaryrd}
\usepackage{nicefrac}       
\usepackage[capitalise]{cleveref}   
\usepackage{microtype}      
\usepackage{algorithm} 
\usepackage{algorithmic} 
\usepackage{nicefrac}       
\usepackage{mathtools}
\usepackage{tabu}
\usepackage{amsthm}

\newtheorem{definition}{Definition}
\theoremstyle{remark}
\newtheorem*{remark}{Remark}
\theoremstyle{theorem}
\newtheorem{theorem}{Theorem}
\theoremstyle{theorem}
\newtheorem{proposition}{Proposition}

\title{A Variational Perturbative Approach to Planning in Graph-based Markov Decision Processes}

\begin{document}

\maketitle

\begin{abstract}
Coordinating multiple interacting agents to achieve a common goal is a difficult task with huge applicability.  This problem remains hard to solve, even when limiting interactions to be mediated via a static interaction-graph.
We present a novel approximate solution method for multi-agent Markov decision problems on graphs, based on variational perturbation theory. We adopt the strategy of planning via inference, which has been explored in various prior works. We employ a non-trivial extension of a novel high-order variational method that allows for approximate inference in large networks and has been shown to surpass the accuracy of existing variational methods. To compare our method to two state-of-the-art methods for multi-agent planning on graphs, we apply the method different standard GMDP problems. We show that in cases, where the goal is encoded as a non-local cost function, our method performs well, while state-of-the-art methods approach the performance of random guess. In a final experiment, we demonstrate that our method brings significant improvement for synchronization tasks.
\end{abstract}

\section{Introduction}
Understanding and designing the behavior of multiple agents interacting through large networks in order to achieve a common goal is a task studied across many fields, such as artificial intelligence \cite{Sigaud2013}, electrical engineering \cite{Tousi2010}, but also economics and biological sciences \cite{Castellano2009} and epidemics \cite{Venkatramanan2018}. Finding optimal policies, e.g., for the distribution of information across a social or communication network, for effective intervention in molecular networks or for vaccinations in order to prevent spreading of diseases are actively discussed problems.
In many of these applications, there exists no unique natural time-scale. In such cases, it is appropriate to reason in continuous-time. The setting of multiple agents on a graph in continuous-time has been previously explored \cite{Kan2008}.

For a Markov decision process (MDP), an optimal policy can be computed in time scaling polynomially in the size of the state and action space using dynamic programming \cite{Puterman2005}. However, in many realistic scenarios, these spaces are high dimensional, e.g., in multi-agent settings \cite{Boutilier1996a}, where the size of the state and action space of the underlying global MDP in general scales exponentially in the number of agents. 
Solving such problems exactly is infeasible for large-scale systems. For this reason, various simplifying assumptions on the structure of MDPs have been proposed. Assuming a factorized state space, a local representation of the transition model and the reward function, decomposing according to a graph-structure, so-called factored MDPs (FMDPs) \cite{Guestrin2001,Boutilier1996b} have been defined. For this model, various approximate solution schemes have been developed \cite{Guestrin2001,Guestrin2003}. 

Graph-based MDPs (GMDPs), as proposed in \cite{Sabbadin2012}, present a subclass of FMDPs, where additionally, agent-wise policies are assumed. We note, that this renders GMDPs equivalent to mMDPs  \cite{Boutilier1996a}, interacting and communicating over a graph-structure. GMDPs can be solved approximately using approximate linear programming \cite{Sabbadin2012}, approximate policy iteration \cite{Sabbadin2012} or approximate value iteration using mean field or cluster variational methods \cite{Cheng2013}. 
Additional simplifying assumptions, such as transition-independence of agents (TI-Dec-MDP) can be made \cite{Sigaud2013}, however reducing the descriptive power of the model. We will thus not compare to such models.

In this work, we propose a novel method for approximate inference and planning for GMDPs inspired by advances in statistical physics. We emphasize that in planning problems \cite{Fleming2006}, system dynamics are known, given a policy. Thus, we do not encounter problems as in reinforcement learning, e.g., as the \emph{exploration-exploitation} dilemma \cite{Puterman2005}. We employ a scheme based on variational perturbation theory \cite{Tanaka1999,Paquet2009,Opper2013,Linzner2018}, which was originally introduced in \cite{Plefka1982}. 

The manuscript is organized as follows: In Section 2, we briefly summarize the connection between variational inference and planning. Here, the main result is that maximization of expected  reward can be coined as maximization of a variational lower bound \cite{Toussaint2006,Furmston2010,Kappen2012}. In Section 3 and 4, we develop an expectation-maximization algorithm to iteratively improve the policy for each agent individually.  Lastly, we perform simulated experiments on several standard planning task and show realistic cases, where current state-of-the-art methods perform similar to random guess, while our method performs well (Section 5). An implementation of our method is available via Git\footnote{https://git.rwth-aachen.de/bcs/vpt-planning}{.}

\section{Background}
\textbf{Continuous-time MDPs on Graphs.}
\begin{figure}[t]
\begin{centering}
		\includegraphics[width=0.95\columnwidth]{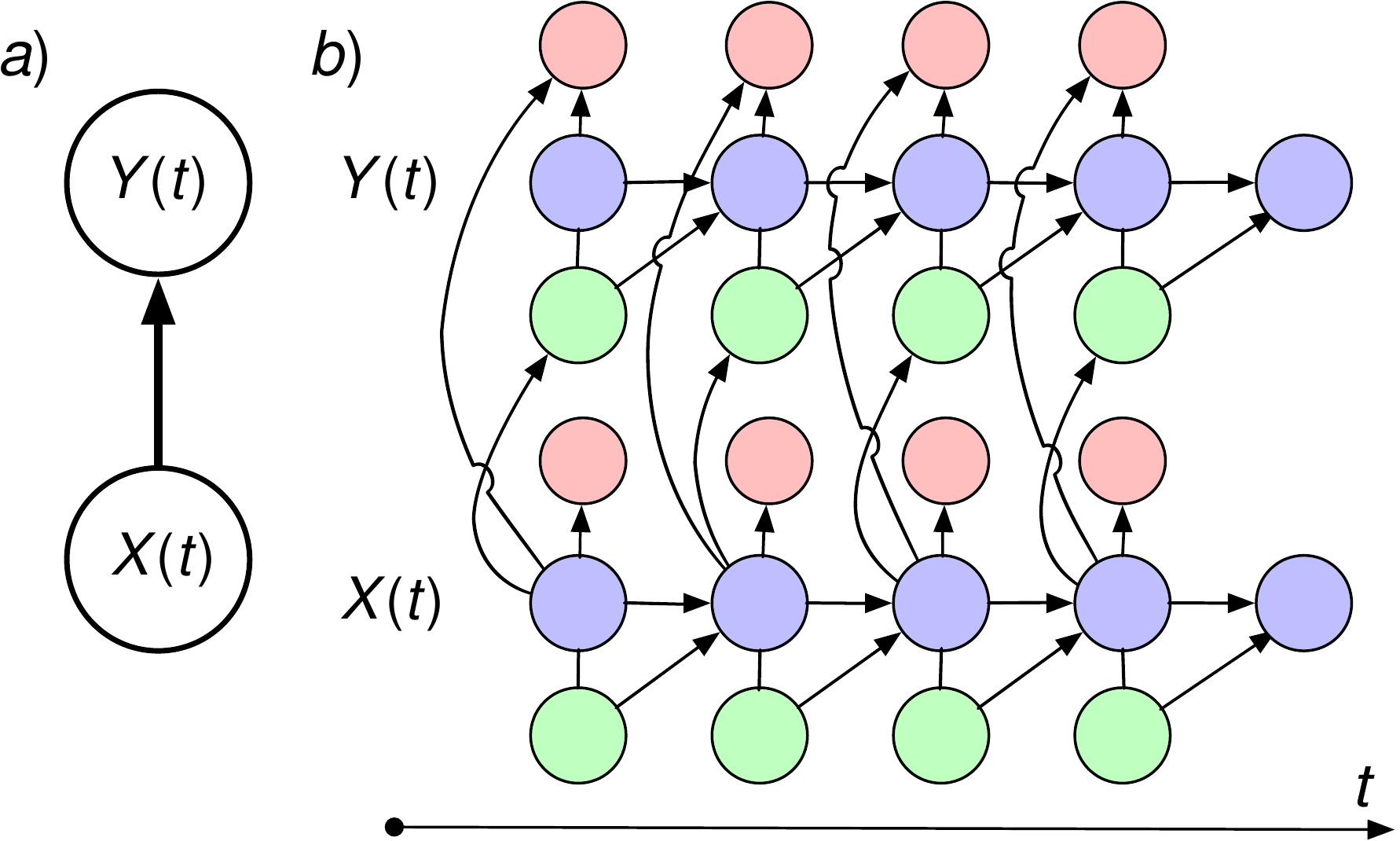}
	\caption{a) A minimal example of a GMDP. The state of agent $Y$ is modulated by its parent $X$. b) The same GMDP unrolled in time as directed graphical model. Agent~$X$ affects agent $Y$'s state (blue) by influencing agent~$Y$'s choice over actions (green) defined by $Y$'s policy. The rewards (red) of agent $Y$  are determined by $Y$'s and $X$'s state. It is also possible to incorporate direct modulation of the transition models by the states of adjacent agents (not displayed for readability).}
	\label{gmdp}
	\end{centering}
\end{figure}
 A MDP models an agent picking actions according to a policy, depending on its current state. Its objective is to minimize its reward, while being subject to some, possibly hostile, environment.  Herein, we define a homogeneous continuous-time MDP by a tupel $(\mathcal{S},\mathcal{A},\mathcal{W},R)$. It defines a two-component Markov process $\{S(t),A(t)\}$ through a transition intensity matrix ${\mathcal{W}}:\mathcal{S}\times\mathcal{S}\times\mathcal{A}\rightarrow\mathbb{R}$ over a countable state space $\mathcal{S}$ and a countable action space $\mathcal{A}$ together with a policy $\pi:\mathcal{A}\times\mathcal{S} \rightarrow[0,1]$. Each state-action pair is mapped to a reward via the \emph{reward function} $R:\mathcal{S}\times \mathcal{A}\rightarrow \mathbb{R}_{-}$. In this work we only consider negative rewards, which poses no restriction as any bounded reward function can be trivially shifted into the negative half-space. For the sake of conciseness, we will often adopt shorthand notations of the type ${p_{t-t'}(s'\mid s,a)\equiv p(S(t)=s'\mid S(t')=s,A(t')=a)}$, with ${s,s'\in\mathcal{S},\,a\in\mathcal{A}}$.
 Given a sequence of actions, the evolution of the MDP can be understood as a usual continuous-time Markov chain (CTMC) with the (infinitesimal) transition probability
 \begin{align*}
p_{{h}}(s'\mid s,a)=\delta_{s,s'}+{h}\: {{\mathcal{W}}}(s'\mid s, a)+o({h}),
 \end{align*}
 for some time-step ${h}$ with  ${\lim_{h\rightarrow 0}o(h)/h=0}$, and $\delta_{s,s'}$ the indicator function. We note, that any intensity matrix ${{\mathcal{W}}}$ fullfils ${{\mathcal{W}}}(s\mid s)=-\sum_{s'\neq s}{{\mathcal{W}}}(s'\mid s)$. 
A multi-agent MDP (mMDP) can be understood as an $N$-component MDP over state- and action-spaces $\mathcal{S}=\bigtimes_{n=1}^N \mathcal{X}_n$, $\mathcal{A}=\bigtimes_{n=1}^N \mathcal{A}_n$, with $\bigtimes$ denoting the Cartesian product, evolving jointly as an MDP. We state explicitly that single component states and actions are entries of the states and actions of the global MDP, i.e. $s=(x_1,\dots,x_N)$ for $s\in\mathcal{S}$ with $x_n\in\mathcal{X}_n$ and $a=(a_1,\dots,a_N)$ for $a\in\mathcal{A}$ with $a_n\in\mathcal{A}_n$ for all $n\in\{1,\dots,N\}$. In this multi-agent setting, each component, referred to as an individual agent, has no direct access to the global state of the system, but can only observe the states of a subset of agents, which we will call its \emph{parent-set}. In the following analysis, we want to restrict ourselves to mMDPs on graphs (GMDPs). 

For GMDPs, the parent configuration can be summarized via a directed graph structure $\mathcal{G}=(\mathcal{V},\mathcal{E})$ encoding the relationship among the agents $\mathcal{V}\equiv\{V_1,\dots,V_N\}$, in this context also referred to as nodes. These are connected via an edge set $\mathcal{E}\subseteq \mathcal{V}\times \mathcal{V}$. The parent-set is then defined as $\mathrm{pa}(n)\equiv\{m \mid  (m,n)\in \mathcal{E}\}$. Conversely, we define the \emph{child-set} $\mathrm{ch}(n)\equiv\{m \mid  (n,m)\in \mathcal{E}\}$. The $n$'th agents process $\{S_n(t),A_n(t)\}$ then depend only on its current state $x_n\in\mathcal{X}_n$, its action  $a_n\in\mathcal{A}_n$ and of all his parents ${U}_n(t)=u_n$ taking values in $\mathcal{U}_n\equiv\bigtimes_{m\in\mathrm{pa}(n)}\mathcal{X}_m$. We display a sketch of a GMDP in Fig. \ref{gmdp}. We note, that cycles in a graphical model as in Fig. \ref{gmdp}(a) are unproblematic, as the corresponding temporally unrolled model, as displayed in Fig. \ref{gmdp}(b), would be acyclic. For a GMDP, the global marginal transition matrix $p_h(s'\mid s,a)$ then factorizes over agents
 \begin{align*}p_h(s'\mid s,a)=\prod_{n=1}^N p_h(y_n\mid x_n,u_n,a_n), \end{align*}
  into local conditional transition probabilities. We define local transition rates ${{w}}_n^u:\mathcal{X}_n\times\mathcal{X}_n\times\mathcal{A}_n\rightarrow\mathbb{R}$ and policies $\pi^u_n:\mathcal{A}_n\times\mathcal{X}_n\rightarrow [0,1]$ for each parent configuration $u\in \mathcal{U}_n$. In the following we write compactly ${{{w}}_n^u(y\mid x, a)\equiv {{w}}_n^u(y_n\mid x_n, a_n)}$ and ${{\pi^u_n(a \mid x)\equiv\pi^u_n(a_n \mid x_n)}}$. Subsequently, we  can express the local conditional transition probabilities as 
\begin{align}\label{eq:local-trans}
&p_h(y_n\mid x_n,u_n,a_n)=\delta_{x,y}+{h}{{w}}_n^u(y_n\mid x_n, a)+o({h}).
\end{align}
We consider the problem of planning in continuous time over a countable state space.
\begin{definition}
\label{eq:Value}
Consider a MDP $(\mathcal{S},\mathcal{A},\mathcal{W},R)$ with initial state $s_0\in\mathcal{S}$ and a policy $\pi$. Then, we can define the (discounted) infinite horizon value function in continuous time as
\begin{align*}
{V^\pi_p(s_0)}=\mathsf{E}_p\left[\int_0^\infty \mathrm{d}t \,{\gamma^t R(S(t),A(t))} \mid S(0)=s_0,\pi\right],
\end{align*}
with $\mathsf{E}_p$ being the expectation with respect to the MDPs path measure $p$.
\end{definition}

We can now cast the planning problem as: for a given initial state $s_0$, find a policy $\pi^*$, such that 
\begin{align}
\pi^*=\underset{\pi}{\arg\max}\{ {V^\pi_p(s_0)} \}.
	\label{eq:SolvMDP}
\end{align}
A common solution strategy for these kinds of problems is to solve the Bellman equation \cite{Puterman2005}. Instead of trying to optimize a Bellman equation, we want to take advantage of the close relationship of planning and inference \cite{Dayan1997,Furmston2010,Toussaint2006,Kappen2012,Levine2013}. In the following, we restrict ourselves to \emph{finite horizon} MDPs, for which the process evolution terminates at time $T$, and later extend to the \emph{infinite horizon} problem.
\newline\newline
\textbf{Finite Horizon Planning via Inference.}
In order to establish the connection between inference and planning we can, following \cite{Dayan1997,Toussaint2006,Levine2013} or similarly \cite{Kappen2012,Furmston2010}, define a boolean auxiliary process $Z(t)$ taking values in $\{0,1\}$, with emission probability
$p(Z(t)=1\mid S(t)=s, A(t)=a)=\exp\{{R(s,a)}\}$. 
We define the finite horizon trajectories ${S_{[0,T]}\equiv\left\{S(\xi)\mid 0\leq\xi \leq T\right\}}$ and ${A_{[0,T]}\equiv\left\{A(\xi)\mid 0\leq\xi \leq T\right\}}$, we can express the reward-optimal posterior process for a given policy~$\pi$ according to Definition \ref{eq:Value} as $p(S_{[0,T]},A_{[0,T]}\mid Z_{[0,T]}=1,\pi,s_0)$, with $ Z_{[0,T]}=1$ meaning that $Z(\xi)=1$ for $0\leq\xi \leq T$.
We consider the Kullback--Leibler (KL) divergence between the posterior ${p(S_{[0,T]},A_{[0,T]}\mid Z_{[0,T]}=1,\pi,s_0)}$ and a variational measure $q(S_{[0,T]},A_{[0,T]}\mid \pi,s_0)$ induced by a time-inhomogeneous MDP with the same policy as $p$ (in supplementary B, we show that the KL-divergences between two continuous-time MDPs with different policies diverges). We arrive at a lower bound for the marginal log-likelihood in the finite horizon case
\begin{align}
\label{eq:lb}
	&\ln p(Z_{[0,T]} =1\mid \pi,s_0)\geq \mathcal{{F}}[q,\pi]+V^\pi_q(s_0),
	\end{align}
	\begin{align}
	\nonumber&\mathcal{{F}}[q,\pi]\equiv \\
	\nonumber&-D_{KL}[q(S_{[0,T]},A_{[0,T]}\mid \pi, s_0)| |p(S_{[0,T]},A_{[0,T]}\mid \pi,s_0)],
\end{align}
with the variational lower bound $\mathcal{{F}}[q,\pi]$.  The full derivation and structure of \eqref{eq:lb} can be found in supplementary A and B. When performing exact inference, meaning that $q(S_{[0,T]},A_{[0,T]}\mid \pi,s_0)=p(S_{[0,T]},A_{[0,T]}\mid \pi,s_0)$, lower bound and log-likelihood coincide and the maximization of the value function as in Definition \ref{eq:Value} corresponds to a maximization of the log-likelihood w.r.t the policy
\begin{align*}
	\underset{\pi}{\arg\max}\{ V_q^\pi(s_0) \}=\underset{\pi}{\arg}\max \{ \ln p(Z_{[0,T]} =1\mid \pi,s_0)\},
\end{align*}
establishing the connection between planning and inference.
When performing approximate inference, we can iteratively maximize the lower bound with respect to $q$ and thereby approximate the log-likelihood, following a maximization with respect to $\pi$. This is the expectation-maximization algorithm, which has been previously applied to policy optimization \cite{Toussaint2006,Levine2013}. 
\newline\newline
\textbf{Infinite Horizon Planning via Inference.}
The same framework as above can be used in order to solve (discounted) infinite horizon problems. Following \cite{Toussaint2006}, this can be achieved by introducing a prior over horizons $p(T)$. As a derivation in continuous-time is missing in literature, we provide it in supplementary C. By choosing $p(T)=\ln(\gamma)\gamma^T$, one recovers exponential discounting with discount factor $\gamma$.

\section{Variational Perturbation Theory for GMDPs}
Calculating a variational lower bound exactly is in general intractable for interacting systems. This is often circumvented by assuming a factorized proposal distribution $q(x)=\prod_i q_i(x_i)$, which corresponds to the \emph{naive mean-field} approximation. Variational perturbation theory (VPT) offers a different approach. Here, the similarity measure (the KL-divergence) itself is approximated via a series expansion \cite{Tanaka1999}. A prominent example of this approach is Plefka's expansion \cite{Plefka1982,Bachschmid-Romano2016}. The central assumption is that variables are only weakly coupled, i.e. the interaction of variables is scaled in some small perturbation parameter $\varepsilon$. In this case, the objective is to find an expansion of the KL-divergence in orders of the interaction parameter $\varepsilon$:
$
{\mathcal{F}[q,\pi]=\mathcal{F}^{(0)}[q,\pi]+\varepsilon \mathcal{F}^{(1)}[q,\pi]+\dots.}
$
This approximate variational lower bound is then maximized with respect to $q$. We note that $\mathcal{F}[q,\pi]$, like in the case of cluster variational methods \cite{yedidia2000} (CVMs), no longer provides a lower bound but only an approximation. However, in contrast to CVMs (which can be used construct similar approximate KL--divergences \cite{Vazquez2017}), variational perturbation theory yields a controlled approximation in the perturbation parameter $\varepsilon$. 
\newline
\newline
\textbf{Weak Coupling Expansion.}
In the following, we want to briefly recapitulate and extend the weak coupling expansion for the lower bound in \eqref{eq:lb}, as derived in \cite{Linzner2018} in the context of factorized CTMCs, to (discounted) infinite horizon GMDPs.
For this we notice, that the lower bound $\mathcal{F}[q,\pi]$ decomposes over time
\begin{align*}
&\mathcal{F}[q,\pi]=\lim_{h\rightarrow 0}\frac{1}{h}\int_0^\infty \mathrm{d}t \: f^h_t[q,\pi],
\\& f^h_t[q,\pi]  =\sum_{s,s',a}\pi(a\mid s)q(s;t)q_h(s'\mid s,a)\ln \frac{p_{{h}}(s'\mid s,a)}{q_h(s'\mid s,a)},
\end{align*}
where we introduced the shorthands for the marginals $q(s;t)\equiv q(S(t)=s)$ and the infinitesimal transition matrix ${q_h(s'\mid s,a)\equiv q(S(t+h)=s'\mid S(t)=s,A(t)=a)}$ of the variational process $q$, for notational convenience. 

For a weak coupling expansion, we decompose the node-wise transition probability into an uncoupled part, given by averaging over parents $p_h({y}_n \mid x_n, a_n) = \mathsf{E}[ p_h({y}_n\mid x_n, u_n,a_n) \mid x_n]$ and a deviation around it, defined as $g({y}_n,x_n,u_n,a_n) \equiv p_h({y}_n \mid x_n, u_n,a_n) - p_h({y}_n\mid x_n,a_n)$. Following standard mean-field procedure, we extract a scale parameter $g({y}_n,x_n,u_n,a_n)=\varepsilon\tilde{g}({y}_n,x_n,u_n,a_n)$, with $\tilde{g}({y}_n,x_n,u_n,a_n)$ having the same magnitude as the uncoupled part.
This allows to rewrite the transition matrix
\begin{align}\label{eq:expansion}
p_h({y}_n \mid x_n, u_n,a_n) = p_h({y}_n \mid x_n,a_n) + \varepsilon\tilde{g}({y}_n,x_n,u_n,a_n). 
\end{align}
We emphasize, that this procedure is generic and can be performed for any transition probability. This motivates the weak-coupling expansion on which the results in this manuscript are build upon, for which we define the shorthand $q(y_n,x_n,u_n,a_n;t)\equiv q(S_n(t+h)={y}_n,S_n(t) = x_n ,U_n(t) = u_n,A_n(t)=a_n)$.
\begin{theorem}[Weak coupling expansion for GMDPs]
\label{plefka}
The time point wise lower bound $f^h_t[q,\pi]$ admits an expansion in $\varepsilon$, as given in \eqref{eq:expansion}, into node-wise terms $f_{t,n}[q,\pi]$ 
\begin{align*}
&f^h_t[q,\pi]=\sum_{n=1}^N f^h_{t,n}[q,\pi_n]+o(\varepsilon),\\
&f^h_{t,n}[q,\pi_n]=\sum_{x_n,{y}_n,u_n,a_n}\pi_n^{u_n}(a_n\mid x_n)q_t({y}_n,x_n, u_n)\\
&\quad\quad\quad\quad\times\ln \frac{p_{{h}}({y}_n\mid x_n, u_n,a_n)}{ q_h({y}_n\mid x_n,u_n,a_n)}.
\end{align*} 
\end{theorem}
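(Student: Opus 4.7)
My plan is to derive the claimed decomposition by chaining three multiplicative factorizations, marginalizing out non-local variables, and then using the weak-coupling expansion \eqref{eq:expansion} to absorb the residual. I would begin by substituting into the definition of $f^h_t[q,\pi]$ the GMDP local transition factorization \eqref{eq:local-trans} for both $p_h$ and $q_h$ together with the agent-wise policy factorization $\pi(a\mid s)=\prod_n\pi_n^{u_n}(a_n\mid x_n)$. The identity $\ln\prod=\sum\ln$ then converts the log-ratio appearing in $f^h_t$ into a sum over $n$, and that outer sum can be pulled outside the remaining summations.

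Next, for each fixed $n$, I would marginalize the product measure down to the variables $(x_n,y_n,u_n,a_n)$. Normalization of the policies, $\sum_{a_{n'}}\pi_{n'}^{u_{n'}}(a_{n'}\mid x_{n'})=1$, and of the one-step transitions, $\sum_{y_{n'}}q_h(y_{n'}\mid x_{n'},u_{n'},a_{n'})=1$, collapse every factor with $n'\ne n$, while summing $q(s;t)$ over $x_{n'}$ for $n'\notin\{n\}\cup\mathrm{pa}(n)$ reduces it to the marginal $q_t(x_n,u_n)$. Combining this with the surviving factor $q_h(y_n\mid x_n,u_n,a_n)$ via the one-step consistency of the variational CTMC yields the joint marginal $q_t(y_n,x_n,u_n)$ that appears in the theorem.

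The final step is to absorb any leftover mismatch into the $o(\varepsilon)$ error term. Using the weak-coupling expansion \eqref{eq:expansion} applied to both $p_h$ and $q_h$, the cross-node couplings not captured by the single-node decomposition enter with at least two factors of the deviation $g=\varepsilon\tilde g$, and hence contribute at most $O(\varepsilon^2)$ to $f^h_t$. Carefully quantifying this residual, which adapts the factorized-CTMC argument of \cite{Linzner2018} to accommodate the action variable and the factorized policy, is the principal technical obstacle; everything else is routine algebra on sums over product measures.
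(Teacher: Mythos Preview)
Your proposal is correct and follows the same approach as the paper, which itself gives no detailed argument but simply states that the proof is along the lines of \cite{Linzner2018}. Your outline---factorize the log-ratio over nodes via the GMDP and policy factorizations, marginalize out the non-local variables using normalization, and control the residual through the weak-coupling expansion \eqref{eq:expansion}---is precisely the Plefka-type argument that reference carries out for factorized CTMCs, now with the action variable and agent-wise policies carried along.
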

The proof of this theorem is along the same lines as in \cite{Linzner2018}.
\newline
\newline
\textbf{Weak Coupling Expansion for GMDPs in Continuous Time.}
In order to derive the approximate variational lower bound in continuous time for a GMDP, we define variational marginal rates 
\begin{align*}
\tau^{u_n}_n(x_n,{y}_n, a_n;t)\equiv \lim_{{h}\rightarrow 0}\frac{q({y}_n,x_n,a_n,u_n;t)}{{h}}\quad \text{ for } x_n\neq y_n
\end{align*}
and $\tau^{u_n}_n(x_n,x_n,a_n;t)=-\sum_{{y}\neq x} \tau^{u_n}_n(x_n,{y}_n,a_n;t)$ but will from now on use the redefinition ${x\equiv x_n},{{y}\equiv {y}_n}$,${a\equiv a_n}$, ${u\equiv u_n}$ for these objects, in order to avoid clutter. We further make use a mean-field assumption ${q({y}_n,x_n,u_n;t)=q_h({y}_n\mid x_n,u_n;t)q_n(x;t)q_n^u(t)}$, with the shorthand ${q}_n^u(t)\equiv\prod_{j\in\mathrm{par(n)}}{q}_n(u_j,t)$, assuming factorization of the marginals. We emphasize, that in contrast to naive mean-field \cite{Opper2008,Cohn2010}, we only have to assume a factorization of these marginals, but keep the dependency on the parents in the rates $\tau^{u}_n(x,{y},a;t)$.
Together with the normalization constraint, this defines an expansion of the proposal transition probability in time-steps of ${h}$:
$q({y}_n,x_n,u_n,a_n;t)=\delta_{x,{y}}{q}_n(x;t){q}_n^u(t)\pi_n^u(a\mid x)+{h}\tau^u_n(x,{y},a;t)+{o}({h})$.
The proposal transition probability defines an inhomogeneous master equation 
\begin{align}
\dot{q}_n(x;t)=\sum_{{y}\neq x,u,a}[\tau^u_n({y},x,a;t)-\tau^u_n(x,{y},a;t)].\label{eq:continuity}
\end{align}
In order for $q$ to describe a probability distribution, this constraint has to be enforced at all times.
\begin{proposition}
\label{eq:plefka-lb}
The variational lower bound of a GMDP has an expansion into agent-wise terms in the perturbation parameter $\varepsilon$
\begin{align}\label{eq:vpt-lb}
&\nonumber\mathcal{{F}}[q, \pi]= \mathcal{{F}}_{\mathrm{VPT}}[q, \pi]+o(\varepsilon)\\
&\mathcal{{F}}_{\mathrm{VPT}}[q, \pi]= \sum_{n=1}^N\int_0^\infty \mathrm{d}t\, d_\gamma(t)\left\{H_n(t)+E_n(t)\right\},\\
&\nonumber H_n(t)=\sum_{{y},x\neq y,u,a}\tau^u_n({y},x,a;t)\ln\left\{\frac{\tau^u_n({y},x,a;t)}{q_n(x;t)q_n^{u}(t)}-1\right\},\\
&\nonumber E_n(t)=\sum_{{y},x\neq y,u,a}\left\{ q_n(x;t)q_n^{u}(t)\pi^u_n(a\mid x){w}^u_n({y},x|a)\right.\\
&\nonumber \left.\quad\quad\quad\quad+\tau_n^u(y,x,a;t)\ln\left[{w}_n^u(y,x|a)\pi^u_n(a\mid x)\right]\right\} ,
\end{align}
with  the discounting function $d_\gamma(t)\equiv 1-\int_0^t \mathrm{d}T\, p(T)$.
\end{proposition}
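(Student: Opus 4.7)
The plan is to derive the proposition from Theorem 1 in three movements: (i) promote the finite-horizon bound to infinite horizon via the prior on terminal times, (ii) insert the short-time expansions of $p_h$ and $q_h$ into the agent-wise integrand, and (iii) take $h\to 0$ and regroup. Since Theorem 1 already establishes $f^h_t = \sum_n f^h_{t,n} + o(\varepsilon)$, the $o(\varepsilon)$ control stated in the proposition follows for free; only the continuous-time rewriting of $f^h_{t,n}/h$ and the discounting have to be carried out.

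For (i), applying the finite-horizon bound \eqref{eq:lb} at each terminal time $T$, weighting by $p(T)=\ln\gamma\cdot\gamma^T$ as in Supplementary C, and swapping the $t$ and $T$ integrations yields the factor $\int_t^\infty dT\, p(T) = d_\gamma(t)$ in front of the time-$t$ integrand. The remaining task is to compute $\lim_{h\to 0} f^h_{t,n}[q,\pi_n]/h$. For (ii), I would substitute $p_h(y\mid x,u,a) = \delta_{x,y}+h\, w_n^u(y\mid x,a)+o(h)$ from \eqref{eq:local-trans} together with, via the mean-field ansatz, $q(y,x,u,a;t) = \delta_{x,y}\, q_n(x;t)\, q_n^u(t)\, \pi_n^u(a\mid x)+h\, \tau_n^u(x,y,a;t)+o(h)$; dividing the latter by its marginal gives the matching expansion of $q_h$. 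I would then split the $(x,y)$-sum into $x=y$ and $x\neq y$. For $x\neq y$ both factors inside the logarithm are $O(h)$, the $h$'s cancel inside $\ln$, and the $O(h)$ prefactor $\pi_n^u(a\mid x)\, q_t(y,x,u)$ leaves a finite limit producing $\tau_n^u\ln\tau_n^u - \tau_n^u\ln[w_n^u\pi_n^u q_n q_n^u]$. For $x=y$ both factors are $1+O(h)$; Taylor-expanding $\ln$ to first order contributes $q_n(x;t)\, q_n^u(t)\, \pi_n^u(a\mid x)\, w_n^u(x\mid x,a)-\tau_n^u(x,x,a;t)$.

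For (iii), I would apply the row-sum identities $w_n^u(x\mid x,a) = -\sum_{y\neq x} w_n^u(y\mid x,a)$ and $\tau_n^u(x,x,a;t) = -\sum_{y\neq x}\tau_n^u(x,y,a;t)$ to rewrite the diagonal contribution as a sum over $y\neq x$ that can be merged with the off-diagonal one. Collecting the $\tau\ln\tau$ pieces together with the extra $-\tau$ pieces gives the entropy-like block $H_n(t)$, while the $qq^u\pi w$ and $\tau\ln[w\pi]$ pieces give the energy-like block $E_n(t)$. Multiplying by $d_\gamma(t)$ from step (i) and summing over $n$ produces $\mathcal{F}_{\mathrm{VPT}}$.

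The main obstacle is the bookkeeping in step (iii): the constant inside the logarithm of $H_n$ surfaces only after matching the diagonal $-\tau_n^u(x,x,a;t)/[q_n(x;t)\, q_n^u(t)\, \pi_n^u(a\mid x)]$ terms, rewritten via the row-sum identity, against the off-diagonal $\tau\ln\tau$ terms, together with a relabelling of summation indices; one must keep the policy factor straight throughout so that the $\pi$'s cancel between numerator and denominator. A secondary point requiring care is to check that the factorized ansatz is compatible with the inhomogeneous master equation \eqref{eq:continuity}, so that the pair $(q_n,\tau_n^u)$ can later be optimized as independent variational parameters subject only to that constraint.
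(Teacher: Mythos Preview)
Your proposal is correct and follows essentially the same route as the paper: start from the agent-wise decomposition of Theorem~\ref{plefka}, insert the $h$-expansions of $p_h$ and of the mean-field $q$, split the sum into $x=y$ and $x\neq y$, use $\ln(1+hx)\to hx$ on the diagonal, take $h\to 0$, and obtain the discount weight $d_\gamma(t)$ from Fubini on the horizon prior; this is precisely the computation carried out in the supplementary derivation. Your explicit mention of the row-sum identities for $w_n^u$ and $\tau_n^u$ to merge the diagonal contribution into the off-diagonal sum is exactly the bookkeeping the paper performs implicitly, and your reading of the ``$-1$'' in $H_n$ as an additive $-\tau$ term (rather than a shift inside the logarithm's argument) is the intended one.
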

\begin{proof}
We proof our proposition by inserting the marginals into the expansion of Theorem \ref{plefka}. We insert the expression of the conditional transition matrix \eqref{eq:local-trans}. Subsequently, we perform ${h} \rightarrow 0$. We arrive at the approximate lower bound of a GMDP.  The discounting function follows from Fubini's theorem. For a detailed derivation, see supplementary D.
\end{proof}
By minimizing this functional, while fulfilling continuity, we can derive approximate dynamic equations corresponding to the stationary solutions of the Lagrangian 
\begin{align}
\label{eq:euler--lagrange}
&\mathcal{{L}}[q,\pi,\eta]=\mathcal{{F}}_{\mathrm{VPT}}[q,\pi]+\mathcal{C}[q,\eta]+V^\pi_q(s_0),
\end{align} 
with $\mathcal{C}[q,\eta]$ being the constrain enforcing \eqref{eq:continuity} (see supplementary E) and Lagrange multipliers $\eta_n(t)$. 

\section{Approximate Inference}
 \begin{algorithm}[t]
   \caption{Stationary points of Euler--Lagrange equation}
   \label{alg:euler-lagrange}
\begin{algorithmic}[1]
   \STATE {\bfseries Input:} Initial trajectories ${q}_n(x;t) \forall t\in[0,T]$ obeying normalization, boundary conditions $q(x;0)$ and $\rho(x;T)$, reward function $R(s,a)$.
   \REPEAT
   \FORALL {$n\in \{1,\dots,N\}$ }
   \STATE Update $\rho_n(x;t)$ by backward propagation \eqref{eq:dyn-marg-II}.
    \STATE  Update ${q}_n(x;t)$ by forward propagation using \eqref{eq:dyn-marg-I} given $\rho_n(x;t)$.
    \ENDFOR
   \UNTIL{Convergence \eqref{eq:vpt-lb}}
   \STATE {\bfseries Output:} Set of ${q}_n(x;t)$ and $\rho_n(x;t)$.
\end{algorithmic}
\end{algorithm}
We finally derive approximate dynamics of the GMDP as stationary points of the Lagrangian, satisfying the Euler--Lagrange equation. These are the key equations that enable us to perform scalable approximate inference for large GMDPs. 
\begin{proposition}
\label{euler--lagrange--dyn}
We define the agent-wise expectation $\mathsf{E}^{\pi}_n[f(x)]\equiv\sum_{u,a} \pi_n^u(a\mid x){q}_n^u(t) f(a,u,x)$.
The stationary points of the Lagrangian \eqref{eq:euler--lagrange} are given by the set of ordinary differential equations for every component $n\in \{1,\dots,N\}$
\begin{align}\label{eq:dyn-marg-I}
&\dot{{q}}_{n}(t)={q}_n(t)\Omega_n(t)
 \end{align}
 \begin{align}\label{eq:dyn-marg-II}
&\dot{{\rho}}_{n}(t)=\left\{\Omega_n(t)+\Theta_n(t)+\Psi_n(t)\right\}{\rho}_{n}(t)
 \end{align}
 with
 \begin{align*}
&\Omega_n(x,y;t)\equiv{\mathsf{E}^{\pi}_{n}[{{w}}^u_n(x,y\mid a)]}\frac{{\rho_{n}(y;t)}}{\rho_{n}(x;t)}\\
&\Theta_n(x,y;t)\equiv\delta_{x,y}\left(\mathsf{E}^{\pi}_{n}[{R}^u_n(x,a)]+\ln{{\rho_n(x;t)}}\frac{\partial_t d_\gamma(t)}{d_\gamma(t)}\right)
\end{align*}
with $\Psi_{n}(t)$ as given in the supplementary and ${R(s,a)=\sum_{n=1}^N R_n^u(x,a)}$. We note, that for exponential discounting $\frac{\partial_t d_\gamma(t)}{d_\gamma(t)}=\ln\gamma$.

\end{proposition}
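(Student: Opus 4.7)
The plan is to derive both ODEs as stationarity conditions of the augmented Lagrangian $\mathcal{L}[q,\pi,\eta]$ in \eqref{eq:euler--lagrange} by separately varying the three independent families of unknowns: the marginal rates $\tau^u_n(x,y,a;t)$, the single-site marginals $q_n(x;t)$, and the Lagrange multipliers $\eta_n(x;t)$ enforcing continuity. Throughout, $\pi$ is held fixed, since this is the E-step of the EM iteration used in Algorithm~\ref{alg:euler-lagrange}, so variations of the policy play no role here.

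First I would set $\delta\mathcal{L}/\delta\tau^u_n(y,x,a;t)=0$. The rate $\tau$ enters $H_n$ through $\tau\ln\tau$, enters $E_n$ linearly via $\tau\ln[w^u_n\pi^u_n]$, and enters $\mathcal{C}$ through the discrete divergence $\sum_{y\ne x}[\tau^u_n(y,x,a;t)-\tau^u_n(x,y,a;t)]$ paired with $\eta_n$. Solving the resulting algebraic equation yields a closed form
\[ \tau^u_n(y,x,a;t)=q_n(x;t)\,q_n^u(t)\,\pi^u_n(a\mid x)\,w^u_n(y,x\mid a)\,\frac{\rho_n(y;t)}{\rho_n(x;t)}, \]
where $\rho_n(x;t)$ is an exponential reparametrization of $\eta_n(x;t)$ (the precise relation fixed by the boundary terms produced when integrating $\eta_n\dot q_n$ by parts in $t$). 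Plugging this back into \eqref{eq:continuity} collapses the right-hand side into $q_n(t)\Omega_n(t)$ with $\Omega_n$ exactly as in the statement, giving the forward equation \eqref{eq:dyn-marg-I}. The dual condition $\delta\mathcal{L}/\delta\eta_n=0$ just reproduces \eqref{eq:continuity} and is thus automatic.

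Next I would take $\delta\mathcal{L}/\delta q_n(x;t)=0$. Three sources contribute: (i) the reward term in $V^\pi_q(s_0)$ produces $\mathsf{E}^\pi_n[R^u_n(x,a)]$, which sits inside $\Theta_n$; (ii) integrating $\eta_n\dot q_n$ by parts in $t$ against the discount $d_\gamma(t)$ generates the logarithmic time-derivative $\partial_t d_\gamma(t)/d_\gamma(t)=\ln\gamma$ appearing in $\Theta_n$ and supplies the $\Omega_n\rho_n$ piece of \eqref{eq:dyn-marg-II} after exponentiating; (iii) $q_n(x;t)$ also appears inside every child $m\in\mathrm{ch}(n)$ through the parent product $q_m^u(t)=\prod_{j\in\mathrm{pa}(m)}q_j(u_j;t)$ in both $H_m$ and $E_m$, and this non-local dependence is precisely what assembles into the $\Psi_n(t)$ driver. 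The main obstacle is exactly this bookkeeping for $\Psi_n$: after substituting the closed-form $\tau$ from step two, one has to verify that the sum of child contributions collapses into a well-defined local operator depending only on the one-hop neighborhood of $n$ in $\mathcal{G}$, so that the backward system \eqref{eq:dyn-marg-II} remains closed and scalable. Once this is in place, reassembling $\Omega_n$, $\Theta_n$ and $\Psi_n$ and re-expressing in terms of $\rho_n=\exp\{\eta_n\}$ gives the claimed ODE for $\rho_n$.
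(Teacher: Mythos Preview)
Your plan matches the paper's derivation almost exactly: vary $\tau$ to obtain a closed form, substitute into the continuity constraint to get the forward equation for $q_n$, and vary $q_n$ (collecting own-site, reward, and child contributions) to get the backward equation for $\rho_n$. The bookkeeping you anticipate for $\Psi_n$ is precisely what the paper does in its Appendix~E.

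There is, however, one imprecision worth fixing. The constraint functional $\mathcal{C}[q,\eta]=\int\mathrm{d}t\,\sum_{n,x}\eta_n(x;t)\bigl[\dot q_n(x;t)-\ldots\bigr]$ is \emph{not} weighted by $d_\gamma(t)$; only $\mathcal{F}_{\mathrm{VPT}}$ and $V^\pi_q$ carry the discount. Consequently the Euler--Lagrange equation obtained from varying $q_n$ has the form $d_\gamma(t)\,[\cdots]+\dot\eta_n(x;t)=0$, and after dividing through by $d_\gamma(t)$ the natural substitution is $\rho_n(x;t)\equiv\exp\!\bigl(\eta_n(x;t)/d_\gamma(t)\bigr)$, not $\exp\{\eta_n\}$ as you write at the end. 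The extra term $\ln\rho_n\,\partial_t d_\gamma/d_\gamma$ in $\Theta_n$ then arises from the chain rule
\[
\frac{\dot\eta_n}{d_\gamma}=\frac{\dot\rho_n}{\rho_n}+\ln\rho_n\,\frac{\partial_t d_\gamma}{d_\gamma},
\]
and not from ``integrating $\eta_n\dot q_n$ by parts against the discount''. With this correction your argument goes through and is the same as the paper's.
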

\begin{proof}
Differentiating $\mathcal{{L}}$ with respect to ${q}_n(x;t)$, its time-derivative $\dot{q}_n(x;t)$, $\tau^u_n(x,y,a;t)$ and the Lagrange multiplier $\eta_n(x;t)$ yield a closed set of coupled 
ODEs for the posterior process of the marginal distributions ${q}_n(x;t)$ and transformed Lagrange multipliers $\rho_n(x;t)\equiv\exp(\eta_n(x;t)/d_\gamma(t))$, eliminating $\tau^u_n(x,y,a;t)$.
For more details, we refer the reader to supplementary E.
\end{proof} 
Although, the restriction on the reward function to decompose into local terms is not necessary, we will assume it for readability.
The coupled set of ODEs can be solved iteratively as a fixed-point procedure in the same manner as in previous works \cite{Opper2008} in a forward-backward procedure (see Algorithm \ref{alg:euler-lagrange}). Because we only need to solve $2N$ ODEs to approximate the dynamics of an $N$-agent system, we recover a linear complexity in the number of agents, rendering our method scalable. 

We require boundary conditions for the evolution interval in order to determine a unique solution to the set of equations in Proposition \ref{euler--lagrange--dyn}.
We thus set ${q}_n(x;0)=\delta_{x,x_0}$ to the desired initial state $x_0$ and $\rho_n(x;t)=1$ for free evolution of the system. 
We note that while we do not consider time-dependent reward in general, our method is capable of doing so. We use this in the following control setting:
in control scenarios, a deterministic \emph{goal} state of the system is often desired \cite{Kappen2012}. In this case, we can put infinite reward on the goal state $x_T$ at the boundary $T$. We then recover the terminal condition $\rho_n(x;t)=\delta_{x,x_T}$.
 By setting the reward-dependent terms in Proposition \ref{euler--lagrange--dyn} to zero, we can evaluate the prior dynamics of the system given a policy. We will use this to evaluate Definition \ref{eq:Value} approximately.
\newline
\newline
\textbf{Expectation-Maximization for GMDPs.}
By examining the approximate lower bound of the value function, one notices that it decomposes over local agent-wise value functions, conditioned on its parents.
\begin{remark}
\label{rem:Local value functions}
The marginal log-likelihood of a GMDP has an approximate agent-wise decomposition
\begin{align}\label{eq:vpt-lb-val}
\ln p(Z_{[0,T]}=1\mid \pi)\geq \sum_{n=1}^N \mathcal{F}^n_{\mathrm{VPT}}[q,\pi]+V^\pi_q(s_0)+o(\varepsilon),
\end{align}
where the $\mathcal{F}^n_{\mathrm{VPT}}[q,\pi]$ are given by Proposition \ref{eq:plefka-lb}. 
\end{remark}

Because of this, the global marginal log-likelihood can be maximized by locally maximizing local lower bounds of the individual agents with respect to local policies $\pi_n$. 
Given the dynamic equations from Proposition \ref{euler--lagrange--dyn}, we now devise a strategy for scalable planning for GMDPs. For this we notice, that the solutions of these equations maximize the lower bound, thereby providing an approximation to the marginal log-likelihood. Because of \eqref{eq:vpt-lb-val}, we can maximize this object as well with respect to the policies $\pi_n$ for each agent individually. Thus the complexity of our optimization scales linearly in the number of components. Given this maximizer, we again evaluate the dynamic equations. We do this repeatedly until convergence, thereby implementing an expectation-maximization (EM) algorithm. This strategy is summarized in Algorithm \ref{alg:RL}. 
We note that the resulting policy is probabilistic, but a MAP-deterministic policy can be constructed.
 \begin{algorithm}[t]
   \caption{Expectation-Maximization for Planning}
   \label{alg:RL}
\begin{algorithmic}[1]
   \STATE {\bfseries Input:} Initial trajectories ${q}_n(x;t)\forall t\in[0,T]$ obeying normalization, boundary conditions $q(x;0)$ and $\rho(x;T)$, reward function $R(s,a)$, initial policy $\pi^{(0)}$.
   \STATE Set  $i=0$
   \REPEAT
   \STATE Solve Euler-Lagrange equations given $\pi^{(i)}$ using Algorithm \ref{alg:euler-lagrange}.
   \FORALL {$n\in \{1,\dots,N\}$ }
   \STATE Maximize \eqref{eq:vpt-lb-val} with respect to $\pi_n$'s.
   \STATE Set maximizer $\pi_n^{(i+1)}=\pi_n^*$.
      \ENDFOR
   \STATE $i\rightarrow i+1$
   \UNTIL{Convergence of \eqref{eq:vpt-lb-val}}
   \STATE {\bfseries Output:} Optimal policy $\pi^*$.
\end{algorithmic}
\end{algorithm}

\section{Experiments}
We evaluate the performance of our method on real-world problem settings against two existing state-of-the-art methods for GMDPs on different network topologies. One method is based on policy iteration in mean-field approximation (API) \cite{Sabbadin2012}, the other on approximate linear programming (ALP) \cite{Guestrin2001}. Both algorithms have been developed and implemented in the GMDPtoolbox \cite{Cros2017}. For small problems, we compare the performance of all algorithms to the exact solution. To ensure a correct evaluation, we first construct the GMDP problem and then transform it to the corresponding MDP problem by a built-in function in the GMDPtoolbox, in order to recover the exact solution. For small problems, we finally perform exact policy evaluation using this MDP.

As competing methods are implemented in discrete-time, we have to pass them an equivalent discrete-time version of the continuous-time problem via uniformization \cite{Kan2008}. For this we generate transformed rewards and transition matrices
\begin{align*}
\tilde{R}^u_n(x,a)\equiv&\frac{{w}_n^u(x\mid x,a)-\ln\gamma}{\kappa-\ln\gamma}R^u_n(x,a),\\
p_{1/\kappa}(y_n\mid a_n,u_n,x_n)\equiv&
\begin{cases}
\begin{array}{c}
{w}_{n}^{u}(y\mid x,a),\\
\kappa+{w}_{n}^{u}(x\mid x,a),
\end{array} & \begin{array}{c}
x\neq y\\
\mathrm{{else}}
\end{array}\end{cases}
\end{align*}
for some ${\kappa\geq|{w}_n^u(x\mid x,a)|}$.

GMDPs have previously been applied to a variety of problems as in agriculture, forest management \cite{Peyrard2007,Sabbadin2012}, socio-physics \cite{Castellano2009,Yang2018} and caching networks \cite{Rezaei2018}, to name a few. In the following we want to benchmark our method on those problem sets. We want compare to the exact solution, thus the network considered is a small regular $2\times3$ grid, with nearest-neighbour bi-directional couplings, unless specified otherwise. In the end, we demonstrate scalability on a larger $5\times 5 $ grid in a synchronization task experiment. We denote the policies returned by the different methods with $\pi_\mathrm{ALP} $ for ALP,  $\pi_\mathrm{API} $ for API,  $\pi_\mathrm{RND} $ for a random policy and  $\pi_\mathrm{VPT} $ for our method VPT. For all experiments, we set the discount factor to $\gamma=0.9$ and the atomic reward $r=1$. As a metric for performance, we calculate the relative deviation $d_r(\pi)\equiv (V^{\pi^*}-V^{\pi})/V^{\pi^*}$ (with ${\pi^*}$ being the exact optimal policy) in percent for the crop and forest planning problem, and the $95\%$ interval of total deviation for the opinion dynamics model.
\begin{table}[t]
\caption{Results of disease control problem. We give the relative deviation $d_r[\%]$ of the values returned by different methods from the exact optimal values.}
\label{tab:results-crop}
\begin{align*} 
\begin{centering}   
\begin{tabular}{c|c|c|c|c}
\toprule
\hline
$(\mu,\nu)$&${\pi_\mathrm{VPT}}$&${\pi_\mathrm{ALP}}$&${\pi_\mathrm{API}}$&${\pi_\mathrm{RND}}$\\\hline
$(0.3,0.3)$&$\mathbf{0}$&$61$&$\mathbf{0}$&$200$\\\hline
$(0.6,0.3)$&$\mathbf{0}$&$60$&$\mathbf{0}$&$292$\\\hline
$(0.9,0.3)$&$\mathbf{0}$&$59$&$\mathbf{0}$&$270$\\\hline
$(0.3,0.6)$&$\mathbf{0}$&$61$&$\mathbf{0}$&$281$\\\hline
$(0.6,0.6)$&$\mathbf{0}$&$60$&$\mathbf{0}$&$354$\\\hline
$(0.9,0.6)$&$\mathbf{0}$&$59$&$\mathbf{0}$&$399$\\\hline
$(0.3,0.9)$&$\mathbf{0}$&$61$&$\mathbf{0}$&$390$\\\hline
$(0.6,0.9)$&$\mathbf{0}$&$60$&$\mathbf{0}$&$344$\\\hline
$(0.9,0.9)$&$\mathbf{0}$&$59$&$\mathbf{0}$&$352$\\\hline
\bottomrule
\end{tabular}
\end{centering}   
\end{align*} 
\end{table}
\newline\newline
\textbf{Disease Control.}
First, we apply our method to the task of disease control, originally posed for  crop fields  \cite{Sabbadin2012}. Each crop is in either of two states --  susceptible or infected ($\mathcal{X}=\{1,2\}$). The rate $\alpha({u})=1+\frac{1}{2}(1-(1-\mu)^{|{u}|})$, with which a susceptible crop is infected, is proportional to the number of its infected neighbours, which we denote by ${|u|}$. The recovery rate is assumed to be constant $\nu$. The planner has to decide between two local actions for each crop -- either to harvest or to leave it fallow and treat it ($\mathcal{A}=\{1,2\}$). Below, we summarize the transition model: 
\begin{align*}  
\begin{centering}   
\begin{tabular}{c | c c | c c}
\toprule
\hline
&$a=1$&&$a=2$&\\
\hline
${w}_n^u$&$x=1$&$=2$&$x=1$&$=2$\\ \hline
$y=1$&$-\alpha({u})$&$\alpha({u})$&0&0\\
$y=2$&0&0&$\nu$&$-\nu$\\ \hline
\bottomrule
\end{tabular}
\end{centering}   
\end{align*} 
The reward model is: 
\begin{align*}  
\begin{centering}   
\begin{tabular}{c | c c }
\toprule
\hline
$R_n^u$&$x=1$&$=2$\\ \hline
$a=1$&$0$&$0$\\
$a=2$&$r$&$r/2$\\ \hline
\bottomrule
\end{tabular}
\end{centering}   
\end{align*}
In Table \ref{tab:results-crop}, we display the results of this experiment for different parameters. We find that API and VPT perform equally well in this problem.
\begin{table}[t]
\caption{Results of forest management problem. We give the relative deviation $d_r[\%]$ of the values returned by different methods from the exact optimal values.}
\label{tab:results-forest}
\begin{align*} 
\begin{centering}   
\begin{tabular}{c|c|c|c|c}
\toprule
\hline
$(\mu,\nu)$&${\pi_\mathrm{VPT}}$&${\pi_\mathrm{ALP}}$&${\pi_\mathrm{API}}$&${\pi_\mathrm{RND}}$\\\hline
$(0.3,0.3)$&$\mathbf{0}$&$1$&$1$&$12$\\\hline
$(0.6,0.3)$&$\mathbf{0}$&$8$&$1$&$12$\\\hline
$(0.9,0.3)$&$\mathbf{0}$&$12$&$1$&$11$\\\hline
$(0.3,0.6)$&$\mathbf{1}$&$27$&$12$&$23$\\\hline
$(0.6,0.6)$&$\mathbf{1}$&$26$&$11$&$22$\\\hline
$(0.9,0.6)$&$\mathbf{1}$&$26$&$12$&$21$\\\hline
$(0.3,0.9)$&$\mathbf{9}$&$58$&$25$&$48$\\\hline
$(0.6,0.9)$&$\mathbf{9}$&$58$&$27$&$50$\\\hline
$(0.9,0.9)$&$\mathbf{10}$&$58$&$38$&$52$\\\hline
\bottomrule
\end{tabular}
\end{centering}   
\end{align*} 
\end{table}
\newline\newline
\textbf{Forest Management.}
We consider the forest management problem as in \cite{Sabbadin2012}. Here, each node has multiple states dependent of each trees age and whether it is damaged by wind or not. A tree can either age or become damaged over time. In a simplified scenario, we are going to assume, that a tree can either be grown -- or not -- or damaged ($\mathcal{X}=\{1,2,3\}$). As trees can shield one-another against wind-damage, this rate $\alpha({u})=1+\frac{1}{2}(1-(1-\mu)^{-{|u|}})$ depends on the number of grown trees $|u|$. The planner has, again, two actions -- either to harvest and cut down the tree or to leave it ($\mathcal{A}=\{1,2\}$). The transition model is summarized below: 
\begin{align*}  
\begin{centering}   
\begin{tabular}{c| c c  c | c c c}
\toprule
 \hline
&$a=1$& & &$a=2$& &\\
\hline
${w}_n^u$&$x=1$&$=2$&$=3$&$x=1$&$=2$&$=3$\\ \hline
$y=1$&$-\nu$&$\nu$&0 &0 &0&0\\
$y=2$&0&$-\alpha({u})$&$\alpha({u})$&$1$ & $-1$&$0$\\ 
$y=3$&0&0&$0$&$1$ &0 &$-1$\\ \hline
\bottomrule
\end{tabular}
\end{centering}   
\end{align*} 
As yield depends on having neighbours for various reasons, the reward function in \cite{Sabbadin2012} has a non-local form. We consider reward functions as: 
\begin{align*}  
\begin{centering}   
\begin{tabular}{c| c c c }
\toprule
 \hline
$R_n^u$&$x=1$&$=2$&$=3$\\ \hline
$a=1$&$0$&$0$&0 \\
$a=2$&0&$r-|u|$&$\frac{r-|u|}{2}$\\ \hline
\bottomrule
\end{tabular}
\end{centering}   
\end{align*} 
The results of this experiment are displayed in Table \ref{tab:results-forest}, where we give the relative deviation in percent between the optimal and the policies returned from the different methods. We find that for all parameters, our method performs significantly better than other methods.
\newline\newline
\textbf{Opinion Dynamics.} In this experiment we test the performance of our method on the seminal Ising model, which has, among others, applications in socio-physics \cite{Castellano2009} to model opinion dynamics, swarming \cite{Sosic2017}, or as a benchmark for multi-agent reinforcement learning \cite{Yang2018}.  In the Ising model, each node is in either of two states $\mathcal{X}=\{-1,1\}$ and the reward function takes the form 
\begin{align}\label{ising-reward}
  R(s,a)=\sum_{n=1}^N x_n\left\{J_n+\sum_{k\in\mathrm{par}(n)}J_{n,k}x_k\right\}. \end{align}
In the following, we want to consider random reward functions, where couplings are drawn from gaussians $J_n\sim \mathcal{N}(0,\mu)$ and $J_{n,k}\sim \mathcal{N}(0,\nu)$. Further, we model the transition rates according to opinion dynamics (voter model) \cite{Castellano2009} $\alpha(u)=\frac{1}{2}\left[1+ \tanh\left( |{u}|\right)\right]$ and $\beta(u)=\frac{1}{2}\left[1- \tanh\left( |{u}|\right)\right]$, with $|u|$, being the sum of the sequence $u$, see below:
\begin{align*}  
\begin{centering}   
\begin{tabular}{c | c c | c c}
\toprule
\hline
&$a=1$&&$a=2$&\\
\hline
${w}_n^u$&$x=-1$&$=1$&$x=-1$&$=1$\\ \hline
$y=-1$&$-\alpha(u)$&$\alpha({u})$&$-\beta({u})$&$\beta({u})$\\
$y=1$&$\beta(u)$&$-\beta({u})$&$\alpha({u})$&$-\alpha({u})$\\ \hline
\bottomrule
\end{tabular}
\end{centering}   
\end{align*} 
The results for an ensemble 20 random reward functions displayed in \cref{voter-small}. Again, we find that our method performs best in all tested parameter regimes, while in some cases RND achieves a higher value than API and ALP.
\begin{table}[t]
\caption{Results of voter model for an ensemble of 20 random reward functions. We give the $95\%$ interval of the deviation of the achieved values returned by different methods from the exact optimal value.}\label{voter-small}
\begin{align*}  
\begin{centering}
\begin{tabular}{c|c|c|c|c}
\toprule
\hline
$(\mu,\nu)$&${\pi_\mathrm{VPT}}$&${\pi_\mathrm{ALP}}$&${\pi_\mathrm{API}}$&${\pi_\mathrm{RND}}$\\\hline
$(0.1,0.0)$&$\mathbf{0.0}$&$\mathbf{0.0}$&$\mathbf{0.0}$&$1.9$\\\hline
$(0.2,0.0)$&$\mathbf{0.4}$&$\mathbf{0.4}$&$\mathbf{0.4}$&$5.6$\\\hline
$(0.0,0.1)$&$\mathbf{0.8}$&$5.5$&$5.2$&$4.7$\\\hline
$(0.1,0.1)$&$\mathbf{1.8}$&$5.6$&$5.7$&$6.4$\\\hline
$(0.2,0.1)$&$\mathbf{0.4}$&$1.2$&$1.8$&$5.6$\\\hline
$(0.0,0.2)$&$\mathbf{0.0}$&$3.5$&$4.2$&$4.7$\\\hline
$(0.1,0.2)$&$\mathbf{0.1}$&$4.4$&$8.8$&$9.1$\\\hline
$(0.2,0.2)$&$\mathbf{0.0}$&$3.0$&$4.7$&$10.4$\\\hline
\bottomrule
\end{tabular}
\end{centering}
\end{align*}  
\end{table}
\newline\newline
\textbf{Synchronization of Agents.}
In a final experiment, we want to compare the performance of methods in a synchronization task. We consider a regular grid of $5\times5$ agents. We encode a synchronization goal by reward function as in \eqref{ising-reward} with $J_n=0$ and $J_{n,k}=-1$. The reward function takes the from of an order parameter $\mathcal{R}(s)=\sum_{i,j\in\mathrm{par}(i)}\delta_{x_i\neq x_j}$, which measures anti-parallel alignment between neighbouring agents. Each agents transition model is local:
\begin{align*}  
\begin{centering}   
\begin{tabular}{c | c c | c c}
\toprule
\hline
&$a=1$&&$a=2$&\\
\hline
${w}_n^u$&$x=-1$&$=1$&$x=-1$&$=1$\\ \hline
$y=-1$&$-0.9$&$0.9$&$-0.1$&$0.1$\\
$y=1$&$0.1$&$-0.1$&$0.9$&$-0.9$\\ \hline
\bottomrule
\end{tabular}
\end{centering}   
\end{align*} 
We display $\mathcal{R}(s)$ over time for different methods in Figure \ref{sync} (LP returned the same policy as MF). For evaluation, we simulated each trained model using Gillespie sampling. 
\begin{figure}[t]
\begin{centering}
\includegraphics[width=0.95\columnwidth]{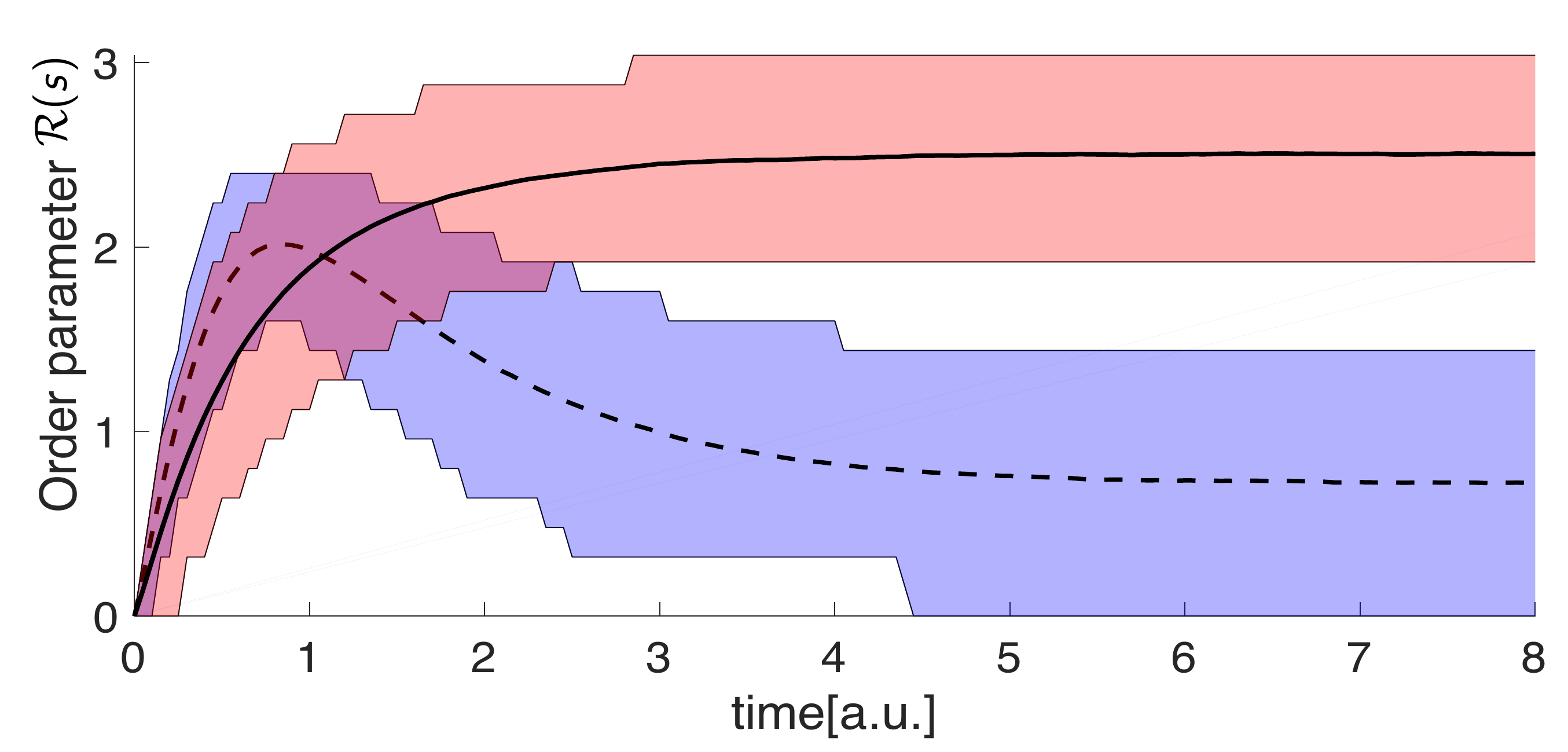}
	\caption{Results of the synchronization task. We track the mean order parameter over time under the VPT (red) and MF (blue-dashed) policy. Areas denote $90\%$ percent of variance.}
	\label{sync}
	\end{centering}
\end{figure}

\section{Conclusion}
We proposed a new method to conduct planning on large scale GMDPs based on variational perturbation theory. We compare our method to state-of-the-art methods for planning in GMDPs and show, that for non-local reward functions state-of-the-art methods approach the performance of random guess, while our method performs well. In the future, we want to use this planning method as the basis for a new reinforcement algorithm for multiple agents on a graph. 
\section*{Acknowledgements}
We thank the anonymous reviewers for helpful comments on the previous version of this manuscript.
Dominik Linzner acknowledges funding by the European Union's Horizon 2020 research and innovation programme (iPC--Pediatric Cure, No. 826121) and (PrECISE, No.  668858). Heinz Koeppl acknowledges support by the European Research Council (ERC) within the CONSYN project, No. 773196, and by the Hessian research priority programme LOEWE within the project CompuGene. 
\bibliography{cv_mdp_aaai}
\bibliographystyle{aaai}

\newpage
\onecolumn
\begin{appendices}

 \section{Appendix A -- KL-Divergence between two MDPs}
For any time-discretization, the KL-divergence takes the form 
\begin{align*}
KL(q\vert\vert p)=\sum_{X_{1},\dots X_{N},A_{1},\dots A_{N-1}}q(A_{1},\dots A_{N-1},X_{1},\dots X_{N})\ln \left[\frac{q(A_{1},\dots A_{N-1},X_{1},\dots X_{N})}{ p(A_{1},\dots A_{N-1},X_{1},\dots X_{N})}\right].
\end{align*}
Making use of the Markov property of both distributions
\begin{align*}
p/q((A_{1},X_{1}),\dots(A_{N},X_{N}))=p/q(A_{0},X_{0})\prod_{k=1}^{N}p/q((A_{k+1},X_{k+1})\mid(A_{k},X_{k})),
\end{align*}
we arrive after some  basic algebraic manipulations at 
\begin{align*}
KL(q\vert\vert p)=\sum_{k=1}^{N}\sum_{X_{k}A_{k}X_{k+1}}q(X_{k+1},A_{k},X_{k})\ln\left[ \frac{q(A_{k}\mid X_{k})q(X_{k+1}\mid A_{k},X_{k})}{p(A_{k}\mid X_{k})p(X_{k+1}\mid A_{k},X_{k})}\right],
\end{align*}
 -- or in the notation of the main-paper
 \begin{align*}
KL(q\vert\vert p)=\sum_{t=1}^{N}\sum_{y,x,a}q(y,x\mid a;t)\pi_{q}(a\mid x)\ln\left(\frac{q(y\mid x,a;t)\pi_{q}(a\mid x)}{p(y\mid x,a;t)\pi_{p}(a\mid x)}\right),
\end{align*}
where we identified the stationary policies $\pi_p(a\mid x)\equiv p(A_{k}\mid X_{k})$ and $\pi_q(a\mid x)\equiv q(A_{k}\mid X_{k})$. 
 \section{Appendix B -- KL-Divergence between two continuous-time MDPs}
 In order to perform the continuous-time limit, we have to define an expansion of the variational distribution $q$ in some infinitesimal time-step $h$.
  \begin{align*}
&q(y,x,a;t)\equiv\delta_{x,y}{q}(x;t)\pi_q(a\mid x)+h\frac{\tau(x,y,a;t)}{q(x;t)\pi_q(a\mid x)}+{o}({h}),
\end{align*}
and $\tau(x,x,a;t)=-\sum_{y\neq x}\tau(x,y,a;t)$. Plugging in this definition and the expansion $p(y\mid x,a)=\delta_{x,y}+h{w}(y\mid x,a)$, we can write
\begin{align*}
KL(q\vert\vert p)=\sum_{t}\sum_{y,x,a}\left[\delta_{x,y}q(x;t)+h\frac{{{w}}(x,y,a)}{\pi_{q}(a\mid x)}\right]\pi_{q}(a\mid x)\ln\left(\frac{\delta_{x,y}+h\frac{\tau(x,y,a;t)}{{q}(x;t)\pi_{q}(a\mid x)}}{\delta_{x,y}+h{{w}}(y\mid x,a)}\frac{\pi_{q}(a\mid x)}{\pi_{p}(a\mid x)}\right).
\end{align*}
Finally, after some algebraic manipulations, using $\lim_{h\rightarrow 0}\ln(1+hx)=hx$ and $\lim_{h\rightarrow 0}h\sum_{t=1}^T=\int_0^T\mathrm{d}t$
\begin{align*}
&KL(q\vert\vert p)=\int_0^T\mathrm{{d}t\;}\sum_{x,y\neq x,a}\left\{ {q}(x;t)\pi_{q}(a\mid x){{w}}(y\mid x, a)-\tau(x,y,a;t)\ln\left({{w}}(y\mid x, a)\pi_{p}(a\mid x)\right)\right\} \\
&+\int_0^T\mathrm{{d}t\;}\sum_{x,y\neq x,a}\tau(x,y,a)\left\{ \ln\left(\tau(x,y,a;t)\right)-\ln {q}(x;t)-\ln\left(\frac{\pi_{q}(a\mid x)}{\pi_{p}(a\mid x)}\right)-1\right\} \\
&+\frac{1}{h}\int_0^T\mathrm{{d}t\;}\sum_{x,y\neq x,a}{q}(x;t)\pi_{q}(a\mid x)\ln\left(\frac{\pi_{q}(a\mid x)}{\pi_{p}(a\mid x)}\right).
\end{align*}
If now the policies $\pi_q\neq \pi_p$, then the last term in the KL-divergence becomes infinite in the limit $h\rightarrow 0$. Thus we have to enforce $\pi=\pi_{q}=\pi_{p} $ and arrive at
\begin{align*}
KL(q\vert\vert p)=\int_0^T\mathrm{{d}t\;}\underset{\equiv E(t)}{\underbrace{\sum_{x,y\neq x,a}\left\{ {q}(x;t)\pi(a\mid x){{w}}(x,y\mid a)-\tau(x,y,a;t)\ln\left({{w}}(x,y\mid a)\pi(a\mid x)\right)\right\}}} \\
+\int_0^T\mathrm{{d}t\;}\underset{\equiv H(t)}{\underbrace{\sum_{x,y\neq x,a}\tau(x,y,a;t)\left\{ \ln\frac{\tau(x,y,a;t)}{{q}(x;t)}-1\right\} }}
\end{align*}
\section{Appendix C -- Discounting}
In order to incorporate discounting into the framework of planning via inference, one can introduce a prior over horizons $T\sim p(T\mid\gamma)$. In this case the KL-divergence between $q$ and the reward optimal posterior (see main-text) $p(X_{[0,\infty]},A_{[0,\infty]}\mid \pi, Z_{[0,\infty]}=1)$ becomes
\begin{align*}
&KL(q(X_{[0,T]},A_{[0,T]}\mid \pi)p(T\mid\gamma)\vert\vert p(T\mid\gamma)p(X_{[0,T]},A_{[0,T]}\mid \pi, Z_{[0,T]}=1))=\\
&KL(q(X_{[0,T]},A_{[0,T]}\mid \pi)p(T\mid\gamma)\vert\vert p(T\mid\gamma)p(X_{[0,T]},A_{[0,T]}\mid \pi))\\
&+\int_0^\infty \mathrm{d}T\,p(T\mid\gamma)\int_0^T\mathrm{d}t \sum_{s,a}q(s;t)\pi(a\mid s)R(s,a)-\ln p(Z_{[0,\infty]}=1\mid \pi).
\end{align*}
By using Fubinis theorem, we can exchange the integration order $\int_0^\infty \mathrm{d}T\,p(T\mid\gamma)\int_0^T\mathrm{d}t=\int_0^\infty \mathrm{d}t\int_t^\infty \mathrm{d}T\,p(T\mid\gamma)$ and further noticing that $\int_t^\infty \mathrm{d}T\,p(T\mid\gamma)=1-\int_0^t\mathrm{d}T\,p(T\mid\gamma)$, we arrive at the discount factor from the main text $d_\gamma(t)\equiv1-\int_0^t\mathrm{d}T\,p(T\mid\gamma)$. We notice that for an exponential prior $p(T\mid\gamma)=\ln\gamma \gamma^T$, we get $d_\gamma(t)=\gamma^t$ -- the standart exponential discount factor. We note, that the planning via inference framework allows naturally for non-exponential discounting, where in general a Bellmann equation can not be issued. Observing that $KL(q(X_{[0,T]},A_{[0,T]}\mid \pi)p(T\mid\gamma)\vert\vert p(T\mid\gamma)p(X_{[0,T]},A_{[0,T]}\mid \pi, Z_{[0,T]}=1))\geq 0$ we arrive at a variational lower bound to the marginal likelihood in the discounted case
\begin{align*}
\ln p(Z_{[0,\infty]}=1\mid \pi)\geq& \mathcal{F}[q,\pi]+V_q^\pi(s0),
\end{align*}
where we inserted the definition
\begin{align*} V_q^\pi(s_0)&=\sum_{s,a}\int_0^\infty \mathrm{d}t \, \gamma^t q(s;t)\pi(a\mid s)R(s,a)\\&=\mathsf{E}_q\left[\int_0^\infty \mathrm{d}t \gamma^t R(X(t),A(t))\mid X(0)=s_0,\pi \right] 
\end{align*}
and 
\begin{align*}
\mathcal{F}[q,\pi]\equiv-KL(q(X_{[0,T]},A_{[0,T]}\mid \pi)p(T\mid\gamma)\vert\vert p(T\mid\gamma)p(X_{[0,T]},A_{[0,T]}\mid \pi)).
\end{align*}
In a derivation, analogous to Appendix B above, we recover
\begin{align*}
\mathcal{F}[q,\pi]=-\int_0^\infty\mathrm{{d}t\;}d_\gamma(t)\left\{E(t)+H(t)\right\}.
\end{align*}
 \section{Appendix D -- Continuous-time variational lower-bound}\label{app:A}
\label{app:Cluster factorization}
In order to perform the continuous-time limit, we represent $q$ by an expansion in ${h}$ in set of marginals
 \begin{align*}
&q(y_n,x_n,u_n,a_n;t)=\delta_{x,y}{q}_n(x;t)q_n^u(t)\pi_n^u(a\mid x)+h\frac{\tau_n^u(x,y,a;t)}{{q}_n(x;t)q_n^u(t)\pi_n^u(a\mid x)}+{o}({h}),
\end{align*}
with $\tau^u_n(x,x,a,t)=-\sum_{y\neq x} \tau_n^u(x,y,a;t)$. 
By inserting $q's$ representation into $\mathcal{F}_{VPT}[q,\pi]$ we get
\begin{align*}
\mathcal{F}_{VPT}[q,\pi]&=\frac{1}{h}\int_0^\infty \mathrm{d}t d_\gamma(t)\,\sum_{y\neq x,x,u,a}{h}{\tau_{n}^{u}(x,y,a;t)}\left[\ln{h}\frac{{\tau_{n}^{u}(x,y,a;t)}}{q_n(x;t)q_n^u(t)\pi_n^u(a\mid x)}-\ln{h} \pi_n^u(a\mid x){{w}}^u_{n}(x,y\mid a)\right]\\
&+\sum_{x,u}\left\{ q_n(x;t)q_n^u(t)\pi_n^u(a\mid x)-{h}\sum_{y\neq x}{\tau_{n}^{u}(x,y,a;t)}\right\} \\
&\times \left[\ln\left\{ 1-{h}\frac{{\sum_{y\neq x}\tau_{n}^{u}(x,y,a;t)}}{q_n(x;t)q_n^u(t)\pi_n^u(a\mid x)}\right\}-\ln\left\{ 1-{h}\sum_{y\neq x,a}\pi_n^u(a\mid x){{w}}^u_{n}(y\mid x,a)\right\} \right]\\
\end{align*}
where we also inserted  ${P( X_n(t)=y_n\mid X_n(t)=x_n,U_n(t)=u_n,A_n(t)=a_n)=\delta_{x,y}+{ {{w}}^u_n(x,y\mid a)\pi_n^u(a\mid x)}{h}}$. With the asymptotic identity $\ln (1+{h}x)= {h}x$ we can simplify
\begin{align*}
\mathcal{F}_{VPT}[q,\pi]&=\frac{1}{h}\int_0^\infty \mathrm{d}t d_\gamma(t)\,\sum_{y\neq x,x,u,a}{h}{\tau_{n}^{u}(x,y,a;t)}\left[\ln\frac{{\tau_{n}^{u}(x,y,a;t)}}{q_n(x;t)q_n^u(t)\pi_n^u(a\mid x)}-\ln \pi_n^u(a\mid x){{w}}^u_{n}(x,y\mid a)\right]\\
&+\sum_{a,u,x}\left\{ q_n(x;t)q_n^u(t)\pi_n^u(a\mid x)-{h}\sum_{y\neq x}{\tau_{n}^{u}(x,y,a;t)}\right\} \\
&\times\left[ {h}\sum_{y\neq x}\pi_n^u(a\mid x){{w}}^u_{n}(x,y\mid a) -{h}\frac{{\sum_{y\neq x}\tau_{n}^{u}(x,y,a;t)}}{q_n(x;t)q_n^u(t)\pi_n^u(a\mid x)} \right]
\end{align*}which becomes in the continuous-time limit ${h} \rightarrow 0$ 
\begin{align*}
&\mathcal{F}_{VPT}[q,\pi]=\sum_{n}\int_0^\infty \mathrm{d}t d_\gamma(t)\,\underset{\equiv H_n(t)}{\underbrace{\sum_{x,y\neq x,u}{\tau_{n}^{u}(x,y,a;t)}[1-\ln{\tau_{n}^{u}(x,y,a;t)}+\ln(q_n^u(t){q_{n}(x;t)})]}}\\
+&\sum_{n}\int_0^\infty \mathrm{d}t d_\gamma(t)\,\underset{\equiv E_n(t)}{\underbrace{\sum_{y\neq x,x,a,u}\left[{q_{n}(x;t)}q_n^u(t){{{w}}^u_{n}(x,y\mid a)\pi_n^u(a\mid x)}+{\tau_{n}^{u}(x,y,a;t)}\ln{ {{w}}^u_{n}(x,y\mid a)\pi_n^u(a\mid x)}\right]}}.
\end{align*}
The contribution of the likelihood term can be derived to be
\begin{align*}
\mathsf{E}[R(s,a)]&=\sum_{n}\int_0^\infty \mathrm{d}t\, d_\gamma(t)\sum_{x,u}{q}_n(x;t)q_n^u(t) R^u_n(x,a)
\end{align*}

\section{Appendix E -- Approximate GMDP dynamics}\label{app:B}
\label{app:Euler-Lagrange}
We are now going to derive the dynamics of GMDPs, defined by fulfilling the Euler--Lagrange equations 
\begin{align*}
\partial_{x}\mathcal{{L}}[t,x,\dot{{x}}]-\partial_{t}[\partial_{\dot{{x}}}\mathcal{{L}}[t,x,\dot{{x}}]]=0.
\end{align*}
First lets consider the derivative with respect to ${{q}_n(x;t)}$:
\begin{align*}
\partial_{{{q}_n(x;t)}}H_n=d_\gamma(t)\sum_{u}\sum_{y\neq x}\frac{{\tau_n^u(x,y;t)}}{{{q}_n(x;t)}},\quad
\partial_{{{q}_n(x;t)}}E_j=d_\gamma(t)\mathsf{E}_{n}[{\sum_a {{w}}^u_n(x,x\mid a)\pi_n^u(a\mid x)}],
\end{align*}
Further if node $n$ has a child $j$
\begin{align*}
&\partial_{{{q}_n(x;t)}}H_j=d_\gamma(t)\sum_{x,u\mid X_n(t)=x_n=x}\sum_{y\neq x}\frac{\tau^u_j(x,y,t)}{{{q}_n(x;t)}},\\
&\partial_{{{q}_n(x;t)}}E_j=d_\gamma(t)\sum_{x}q_j(x;t)\mathsf{E}_n[{\sum_a {{w}}^u_n(x,x\mid a)\pi_n^u(a\mid x)}\mid X_n(t)=x].
\end{align*}
With respect to the derivative ${\dot{q}_n(x;t)}$ we get
\begin{align*}
\partial_{{\dot{q}_n(x;t)}}\mathcal{{L}}=-{{w}_n(x;t)}.
\end{align*}
We derive with respect to the transitions 
\begin{align*}
\partial_{{\tau_n^u(x,y,a;t)}}H_n=d_\gamma(t)\ln [{{q}_n(x;t)}q_n^u(t)]-\ln {\tau_n^u(x,y,a;t)},\quad
\partial_{{\tau_n^u(x,y,a;t)}}E_n=d_\gamma(t)\ln  {{{w}}^u_n(x,y\mid a)\pi_n^u(a\mid x)}.
\end{align*}
thus
\begin{align*}
\partial_{{\tau_n^u(x,y,a;t)}}\mathcal{L}=d_\gamma(t)\ln [{{q}_n(x;t)}q_n^u(t)]-d_\gamma(t)\ln {\tau_n^u(x,y;t)}+d_\gamma(t)\ln {{{w}}^u_n(x,y\mid a)\pi_n^u(a\mid x)}-{\eta_n(x;t)}+{\eta_n(y;t)}.
\end{align*}
The derivative with respect to the Lagrange-multipliers yields:
\begin{align*}
\partial_{{\eta_n(x;t)}}\mathcal{L}=-\left\{{\dot{q}_n(x;t)}-\left[\sum_{y\neq x,u}{\tau_n^u(y,x;t)}-{\tau_n^u(x,y;t)}\right]\right\}
\end{align*}
And lastly derivatives of $\mathsf{E}[R(s,a)]]$
\begin{align*}
\partial_{{{q}_n(x;t)}}\mathsf{E}[R(s,a)]=d_\gamma(t)\sum_{u,a}q_n^u(t)\pi_n(a\mid x)R^u_n(x,a)+d_\gamma(t)\sum_{j\in\mathrm{child}(n)}\sum_{u,a}q_j(x;t)q_j^{u/n}(t)\pi_n(a\mid x) R^u_j(x,a)
\end{align*}

These can then be combined as the following Euler-Lagrange equations:
\begin{align*}
&(\mathrm{I})\quad0=d_\gamma(t)\sum_{u}\sum_{y\neq x}\frac{{\tau_n^u(x,y,a;t)}}{{{q}_n(x;t)}}+d_\gamma(t)\mathsf{E}_{n}[{ {{w}}^u_n(x,y\mid a)\pi_n^u(a\mid x)}]+\dot{\eta}_n(x;t)+d_\gamma(t)\mathsf{{E}}_{n}[{R^u_n(x,a)}]\\
&\quad\quad\quad+d_\gamma(t)\sum_{j\in \mathrm{child}(n)}\sum_{x,u\mid X_n(t)=x}\sum_{y\neq x}\frac{{\tau_n^u(x,y,a;t)}}{{{q}_n(x;t)}}\\
&\quad\quad\quad+d_\gamma(t)\sum_{x}q_j(x;t)\left\{\mathsf{E}_{j}[{{w}}^u_j(x,x\mid a)\mid X_n(t)=x]+\mathsf{E}_{j}[{R^u_j(x,a)}\mid x_{i}=x]\right\}\\
&(\mathrm{II})\quad 0=\ln [{{q}_n(x;t)}q_n^u(t)]-\ln {\tau_n^u(x,y,a;t)}+\ln {{{w}}^u_n(x,y\mid a)\pi_n^u(a\mid x)}-{\eta_n(x;t)/d_\gamma(t)}+{\eta_n(y;t)/d_\gamma(t)}\\
&(\mathrm{III})\quad{\dot{q}_n(x;t)}=\sum_{y\neq x,u,a}\left\{\tau_n^u(y,x,a;t)-\tau_n^u(x,y,a;t)\right\}.
\end{align*}
Exponentiating $(\mathrm{II})$ gives
\begin{align*}
(\mathrm{II}^*)\quad{\tau_n^u(x,y,a;t)}={{q}_n(x;t)}q_n^u(t) { {{w}}^u_n(x,y,a\mid a)\pi_n^u(a\mid x)}{\rho_n(y;t)}/{\rho_n(x;t)},
\end{align*}
where ${\rho_n(x;t)}\equiv\exp({\eta_n(x;t)/d_\gamma(t)})$. Assuming that ${{w}}$ is irreducible, ${\rho_n(x;t)}$ and ${{q}_n(x;t)}$ are non-zero in $(0,T)$
 and we can thus eliminate ${\tau_n^u(x,y,a;t)}$ in $(\mathrm{I})$ and $(\mathrm{II})$.
 Thus
 \begin{align*}
(\mathrm{I}^*)\quad{\dot{\rho}_n(x,t)}&=\sum_{y\neq x,a}\mathsf{E}_{n}[{{{w}}^u_n(x,y\mid a)\pi_n^u(a\mid x)}]{\rho_n(y;t)}\\
+&\left\{\mathsf{E}_{n}[{ {{w}}^u_n(x,x\mid a)\pi_n^u(a\mid x)}]+\psi_n(x;t)+\ln\rho_n(x;t)\frac{\partial_t d_\gamma(t)}{d_\gamma(t)}\right\}{\rho_n(x;t)}\\
(\mathrm{III}^*)\quad{\dot{q}_n(x;t)}&=\sum_{y\neq x,a}\left\{m_n(y)\mathsf{E}_{n}[{{{w}}^u_n(y,x\mid a)\pi_n^u(a\mid y)}]{\rho_n(x;t)}/{\rho_n(y;t)}\right.\\
-&\left.{{q}_n(x;t)}\mathsf{E}_{n}[{ {{w}}^u_n(x,y\mid a)\pi_n^u(a\mid x)}]{\rho_n(y;t)}/{\rho_n(x;t)}\right\},
 \end{align*}
 where we used that
 \begin{align*}
\frac{\partial_{t}\eta_{i}(x)}{d_\gamma(t)}= \frac{1}{\rho_{i}(x;t)}\partial_{t}\rho_{i}(x;t)+\ln\left(\rho_{i}(x;t)\right)\frac{\partial_{t}d_\gamma(t)}{d_\gamma(t)}
 \end{align*}. We further summarized
 \begin{align*}
 \psi_n(x;t)=&\sum_{j\in\mathrm{child}(n)}\sum_{x}q_j(x;t)\left\{ \sum_{y\neq x,a}\frac{\rho_j(y;t)}{\rho_j(x;t)}\mathsf{E}_{j}[{{w}}^u_j(x,y\mid a)\pi_j^u(a\mid x)\mid X_n(t)=x]\right.\\
 +&\left.\mathsf{E}_{j}[{{w}}^u_j(x,x\mid a)\pi_j^u(a\mid x)\mid X_n(t)=x]+\mathsf{{E}}_{j}[{C_j(x,u)}\mid x_{i}=x]\right\}.
 \end{align*}
 
\section{Appendix F -- Policy evaluation}
 \begin{figure}[t!]
	\begin{center}
		\includegraphics[width=0.5\columnwidth]{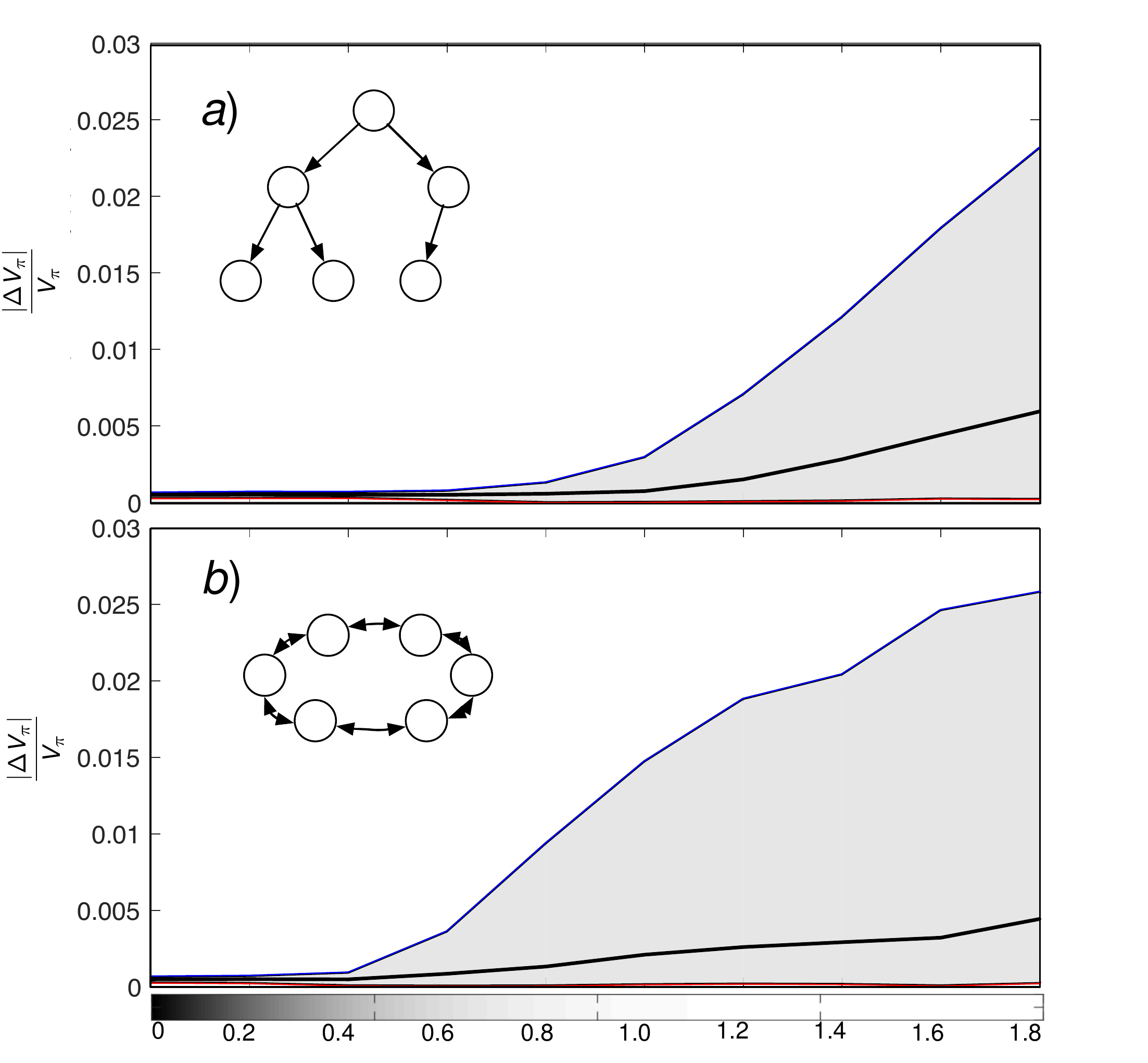}
       \end{center}
	\caption{Relative deviation of the approximate expected reward from the exact one for different policies parametrized by $\beta$. For reference we plotted the scale of $\pi \propto 2\tanh (\beta)$ on the x-axis (colormap: white$=0$ black$=1$). The performance is evaluated using 50 random reward functions of type (10) for two different graph topologies (inset). We plotted the $5\%$ (red), $50\%$ (black) and $95\%$ (blue) percentiles. We find that our method performs slightly better on trees a) than on bi-directed chains b).}
	\label{bound}
\end{figure}

We want to test how accurately we can approximate the true lower bound using Prop. 2. We define local state and action spaces $\mathcal{X}=\{0,1\}$ and $\mathcal{A}=\{0,1\}$. To test the performance under different types of policies, we consider policies 
\begin{align}\label{ising_pol}
 \pi^u_i(a\mid x)=\frac{1}{2}+\mathrm{tanh}\left(\beta x_i\sum_{j\in\mathrm{par(i)}}u_j\right),
 \end{align}
that become increasingly deterministic with increasing $\beta$. In order to keep this experiment simple, we assume a deterministic transition model ${{{w}}^u_n(y\mid x,a)=\delta_{x,(-1)^a}}$ for $y\neq x$, independent on the parent configuration. Because in this experiment, the agents are coupled by their policies, a more deterministic policy corresponds to a stronger coupling between the agents, thus increasing the perturbation parameter $\varepsilon$. We test our method on two different tree topologies and random reward functions of type (10). We set the variances ${\sigma_\alpha=\sigma_J=0.2}$. One is is a tree-network sketched in the inset of Fig. \ref{bound} a), the other a bi-directional chain with periodic boundary conditions as sketched in Fig. \ref{bound} b). We find that for both networks, the accuracy of the approximation lowers for increasing $\beta$, however the accuracy is worse for the bi-directional chain. 
\end{appendices}

\end{document}